\documentclass[10pt,journal]{IEEEtran}
\usepackage{amsmath}
\usepackage{bm}
\usepackage{graphicx}
\usepackage{float}
\usepackage{amssymb}
\usepackage{amsthm}
\usepackage{authblk}

\newtheorem{theorem}{Theorem}
\newtheorem{corollary}{Corollary}

\begin{document}
%

\title{Benchmarks for Image Classification and Other High-dimensional Pattern Recognition Problems}

\author[*]{Tarun Yellamraju}
\author[$\dagger$]{Jonas Hepp}
\author[*]{Mireille Boutin, member IEEE}
\affil[*]{School of Electrical and Computer Engineering, Purdue University, West Lafayette IN, USA}
\affil[$\dagger$]{Department of Mechanical Engineering, Ilmenau Technical University, Ilmenau, Germany}
\renewcommand\Authands{ and }
\maketitle

\begin{abstract}
A good classification method should yield more accurate results than simple heuristics. 
But there are classification problems, especially high-dimensional ones like the ones based on image/video data, for which simple heuristics can work quite accurately; the structure of the data in such problems is easy to uncover without any sophisticated or computationally expensive method. On the other hand, some problems have a structure that can only be found with sophisticated pattern recognition methods. 
We are interested in quantifying the difficulty of a given high-dimensional pattern recognition problem. We consider the case where the patterns come from two pre-determined classes and where the objects are represented by points in a high-dimensional vector space. However, the framework we propose is extendable to an arbitrarily large number of classes.

We propose classification benchmarks based on simple random projection heuristics. 
Our benchmarks are 2D curves parameterized by the classification error and computational cost of these simple heuristics.
Each curve divides the plane into a ``positive-gain'' and a ``negative-gain'' region. The latter contains methods that are ill-suited for the given classification problem. The former is divided into two by the curve asymptote; methods that lie in the small region under the curve but right of the asymptote merely provide a computational gain but no structural advantage over the random heuristics.

We prove that the curve asymptotes are optimal (i.e. at Bayes error) in some cases, and thus no sophisticated method can provide a structural advantage over the random heuristics. Such classification problems, an example of which we present in our numerical experiments, provide poor ground for testing new pattern classification methods. Our numerical experiments also feature cases where the sophistication of methods like support vector machines or deep neural networks provides structural advantages (left of asymptote) as well as cases where such methods are ill-suited for the given classification problem (in the negative-gain region).

\end{abstract}


\section{Introduction}
In machine learning, 
the two-class pattern recognition paradigm is often formulated in terms of a discriminant function $g(x)$ whose sign should determine the class of the data point $x$ with a high probability of accuracy. The entries of $x\in {\mathbb R}^p$ are features used to represent each object to be categorized, and the function $g(x)$ is estimated numerically using a training set of pre-labeled feature points $x_1,\ldots,x_N \in \mathbb{R}^p$. The function g(x), often concocted using a complicated non-linear combination of the original features $x$, can be viewed as a new feature. It is a distinguishing feature for the classes considered, and many efforts have been spent to develop powerful methods to effectively find good discriminant functions $g(x)$. Deep neural networks are a great example of such \cite{PalmThesis2012}.

Pattern recognition in high-dimension can be quite challenging.
Indeed, finding such a distinguishing feature $g(x)$ can be quite difficult when the feature vectors $x\in {\mathbb R}^p$ are in a space of high-dimension $p$. Unless there are several good choices of features, finding a good $g(x)$ is like looking for a needle in a haystack: without a good trick such as a prior model assumption or other prior information, one needs a considerable amount of numerical work to be successful.

However, recent work suggests that high-dimensional data representing images or videos have a lot of structure, ``so much so that a mere random projection of the data is likely to uncover some of that structure'' \cite{han2015hidden}. There is some evidence that this phenomenon extends to other types of high-dimensional data \cite{BoutinCluster2016}. Thus, it is quite conceivable that there are many high-dimensional pattern recognition problems  where several and easily identifiable good choices of $g(x)$ exist. Such cases are much easier to deal with from a numerical and computational perspective. 


It is not necessarily easy to tell from training data how many good classifiers $g(x)$ exist for a given pattern recognition problem, especially in high-dimension. More generally, there are no good all purpose methods for determining the difficulty of a given pattern recognition problem from training data. However, having such information would be useful, as it could guide the choice of method used to attack such problems in practice.  In the context of machine learning research, that information could also be used to select good datasets for testing new pattern recognition methods: as good results in an easy dataset could give false impressions of success, one should develop new methodologies with the hard datasets in mind and focus the testing efforts on such datasets.

In the following, we provide a framework for quantifying the difficulty of a given pattern recognition problem. Specifically, we propose sequences of upper bounds for the probability of error of a binary classification. The bounds are obtained using simple heuristics based on random projection of the data on a one-dimensional linear subspace; the decisions are made by thresholding the resulting one-dimensional features. If a sophisticated pattern recognition method produces a worse outcome than these bounds, one must conclude that it is ill-suited for that particular pattern recognition problem. In other words, in order to justify its complexity and computational cost, a sophisticated pattern recognition method should yield a significantly smaller probability of error than our proposed error benchmarks for the given classification problem.

The sequences of bounds we propose are all monotonic decreasing (Theorem \ref{thm:mono_k}). 
Meanwhile the computational cost of the underlying classification heuristic starts from extremely modest and then increases gradually. One can use the computational cost of the method (either training cost or testing cost, depending on context) as a proxy for complexity. The trade off between complexity and accuracy for each sequence of bounds can be visualized  by looking at the corresponding curve in the error and computational cost plane, as illustrated in Figure \ref{fig:theoretical_graph}. 
One can look at each of these curves in two ways: 1) for any given desired classification error, each curve provides an upper bound on tolerable complexity (computational cost), and 2) for any given computational cost allowed, each curve provides a maximum tolerable classification error. Thus the curve divides the relevant area of the benchmark plane (above the error axis and right of the vertical line through Bayes Error) into two regions : the ``negative-gain'' region, situated above the curve, and the ``positive-gain'' region (shaded), below the curve and right of the vertical line through Bayes error. Any method lying in the negative-gain region of any benchmark curve (i.e. any value of n) is ill-suited for the classification problem at hand.

Each sequence of bounds we propose converges to a limit; that limit can also be used as an error benchmark which effectively divides the positive-gain region into two: 
the region directly under the curve (right of the asymptote), which we call ``computational-gain'' region, corresponds to methods that only provide a computational advantage over the random heuristics. Indeed, since there exists a random heuristic with comparable error rate (right above the point of the method, on the benchmark curve), the complexity of the structure allowed by the method is not effectively needed to achieve the given error rate. In other words, a comparable or even smaller rate can be achieved with piece-wise linear separations chosen at random, and thus any non-linearity afforded by the method is not effectively exploited. In contrast, we call the region left of the asymptote the ``structural-gain'' region.

In some cases, the limit of our sequences is equal to Bayes Error (Theorem \ref{thm:asymp}). Thus in those cases, the structure-gain region is empty. 
Such pattern recognition problems are extremely easy and do not warrant the use of any sophisticated method. Testing new pattern recognition methods on such datasets should be discouraged. To the contrary, one should look for datasets representing pattern recognition problems for which the structure-gain region is very large. For such datasets, there is a lot of room for improvement in accuracy and thus the use of complex and/or computationally expensive methods is justified.  

\begin{figure}[t]
\centering
\includegraphics[width=0.5\textwidth]{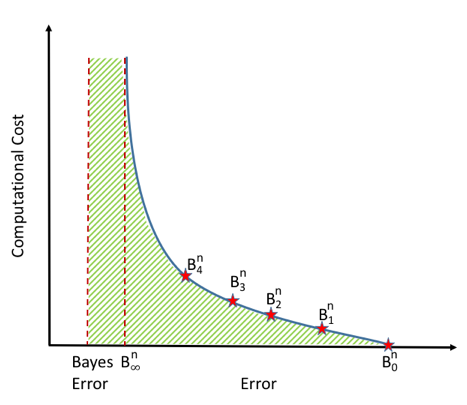}
\caption{\textbf{The Benchmark Plane:} For a given classification problem, the sequence of bounds $B_0^n, B_1^n, B_2^n, B_3^n, \ldots $ for some $n \in \mathbb{N}$ along with their (training) computation time define a curve with asymptote at $B_\infty^n$. Methods whose sophistication is warranted for the problem must lie in the shaded region and to the left of the asymptote.
}
\label{fig:theoretical_graph} 
\end{figure}

\section{Benchmarking Pattern Recognition Problems}
\label{section:related work}
One benchmark traditionally used to put the results of a pattern recognition method in perspective is Bayes Error. Bayes Error is the probability of error of a classifier that assigns classes following Bayes Decision Rule and assuming full knowledge of the class (marginal) probability densities and priors. It is proven to be optimal, in the sense that no classifier can obtain a probability of error less than Bayes Error. Thus, Bayes Error is a benchmark for the probability of error of a classifier which corresponds to the minimum possible probability of error that can be achieved for the given feature vector \cite{duda2012pattern}.

Computing Bayes error from training data is difficult, especially in high-dimension. The task involves estimating the class probability densities and priors, and integrating these functions over potentially complex boundaries. While the case of univariate normally distributed class densities can be computed analytically, high-dimensional distributions (even normal ones) must be handled numerically. Computing Bayes error when the problem does not follow any common probability model is even more difficult. There have been efforts to approximate Bayes error and bound it. For the special case of binary classification where both class densities are multivariate Gaussians, Fukunaga et al \cite{BayesMultiGauss} have presented an explicit mathematical expression that approximates Bayes error. The Chernoff bound \cite{chernoff1952measure} and the Bhattacharyya bound \cite{bhattachayya1943measure} are well-known upper bounds; some closed form expressions for these bounds exist for the special case of Gaussian densities of the underlying classification data, but they are not necessarily tight or particularly insightful for distributions that deviate markedly from a Gaussian \cite{duda2012pattern}. 

Thus in practice, one seldom attempts to either estimate or bound Bayes error. Instead, one merely focuses on developing a classifier with the smallest possible probability of error. The accuracy of the chosen method on some test sets is used to measure the success of that method. For example, the large scale image classifier in \cite{ImageClassificationVanGool} reports classification accuracy compared to other classifiers achieved on a subset of the ILSVRC 2010 dataset \cite{ILSVRC15}. Similarly, the texture classifier in \cite{textureclassifier} reports and compares classification accuracies on three texture datasets : KTH-TIPS2 \cite{KTH-TIPS2}, FMD \cite{FMD} and DTD \cite{DTD}. Yet another example is the DCNN based image classifier in \cite{DCNNClassifier}, which reports classification accuracies on the datasets : MIT-67 \cite{MIT-67}, Birds-200 \cite{Birds-200}, PASCAL-07 \cite{PASCAL-07} and H3D \cite{H3D}. If these test sets are appropriate, then one can conclude that Bayes error for each of these classification problems should be no larger than the error reported with each of the chosen methods.

But as we just mentioned, Bayes Error is merely a minimum bound for the probability of error of a classifier corresponding to the overlap between the probability distributions of the two underlying classes (class overlap): while it may be small for some problems, it is not clear how difficult the underlying pattern recognition problem really is. For example, for the same value of Bayes Error, the optimal class separation could be a simple hyperplane (obvious structure of data that can be easily found) or a very complex hypersurface with highly varying curvatures and multiple connected components (a hidden structure that requires a sophisticated pattern recognition method). Thus the value of Bayes Error provides no insight into the complexity of the classifier needed to solve the problem or the nature of the pattern recognition problem at hand. 

Another benchmark that can be used to judge the performance of a classifier  on a given classification problem is the value of the minimum class prior: $\min \{\text{Prob }(\omega_1),\text{Prob }(\omega_2) \}$, where $\omega_1$ and $\omega_2$ are the two possible classes for the given classification problem. For example, if $\text{Prob }(\omega_1)=98\%$, then building a classifier with a 1.9\% probability of error would not be particularly impressive. Indeed, picking the most likely class ($\omega_1$) regardless of the value of $x$ would nearly beat that classifier. This is relevant even if the class priors are equal $\text{Prob }(\omega_1)=\text{Prob }(\omega_2)=50\%$. Indeed classifiers with an estimated probability of error above $50\%$ are not unheard of. Estimating the class priors from training data is not difficult. So while this benchmark also provides only a little bit of understanding about the nature of the pattern recognition problem at hand, it is useful to consider it when analyzing the results of a classifier.

\section{Random Projections and TARP}
As we mentioned earlier, the problem of building a two-class classifier is often formulated as a quest for a function $g(x):{\mathbb R}^p\rightarrow {\mathbb R}$. The value of $g(x)$ can be viewed as a new feature; decisions are made by thresholding at zero the value of that feature. The difficulty of finding a good $g(x)$ tends to increase with the dimension of the feature vector $x$. Thus some methods first seek to decrease the dimension of $x$. Ideally, this should be done in such a way that the information that distinguishes the classes is preserved. However, this is a chicken and egg problem: it is hard to preserve that information without knowing what it is, but one can hardly know what it is before decreasing the number of dimensions. 

One way to decrease the dimension of $x$ is to project it onto some lower-dimensional subspace. In the simplest cases, the lower-dimensional subspace is linear. Some approaches seek the best projection (in terms of some cost function) by careful numerical optimization. Other approaches project the data in some random fashion. The latter tends to be a lot less computationally extensive than the former. However, one might express concerns about the accuracy of mere random projections. While these concerns are justified, random projection methods are still popular because they have been shown to be surprisingly effective.  
For example, classification algorithms like \cite{two}, \cite{five}, \cite{six} and \cite{seven} use random projections to transform high dimensional data into lower dimensional feature vectors as a pre-processing step before classification. Iterative random projections have also been used in the realm of Big Data to find visual patterns of structure within data by projecting from a high dimensional space to a low dimensional space \cite{four}. An evaluation of the performance of random projections for dimensionality reduction can be found in \cite{three}. 



Previous work  suggests that images \cite{han2015hidden} and other high dimensional data \cite{BoutinCluster2016} have so much hidden structure within that even randomly projecting down to just one dimension will likely uncover some of that structure.  Thus, one can potentially construct a simple classifier based on the value of a feature vector $x\in {\mathbb R}^p$ by generating a random vector $r\in {\mathbb R}^p$ and thresholding the value of the projection  $r\cdot x$. In mathematical terms, we can classify the data according to the rule (after relabeling the classes if needed) \[
r\cdot x\lessgtr^{\omega_1}_{\omega_2}t.
\] 
for some threshold value $t$. Observe that a threshold value of $t=\infty$ would classify all the points into the same class $\omega_1$. Conversely, a threshold value of $t=-\infty$ would classify all the points into the same class $\omega_2$. For optimal accuracy, the threshold $t\in {\mathbb R}$ should be chosen to minimize the probability of error of the classifier.
We call the classifier obtained with the optimal value of the threshold a ``TARP" (Thresholding After Random Projection), for short. 
Observe that, while the accuracy of such a classifier may not be particularly impressive in general, it is no worse than the accuracy of picking the most likely class within the population. In other words, regardless of the projection vector $r$, the probability of error of a TARP is no greater than $\min \{ \text{Prob }(\omega_1),\text{Prob }(\omega_2) \}$. Furthermore, for some values of $r$, it can be quite low depending on the nature of the classification problem at hand. For example, if the data has been extended to a higher dimensional space using a kernel method (as with support vector machines) in such a way that the data became linearly separable, then the probability of error of a TARP classifier could be as low as $0\%$.

\section{TARP-based Benchmarks}\label{section:TARP_def}
Let $B_0$ be the error rate achieved by always choosing the most likely class in the population, in other words 
\[
B_0=\min \{ \text{Prob }(\omega_1),\text{Prob }(\omega_2) \}.
\]
This benchmark is our starting point. Let $r$ be a sample of a p-dimensional random variable ${\mathbf r}\in {\mathbb R}^p$, and consider the projection of the feature vector $x$ onto ${\mathbf r}$, namely $r\cdot x$. Consider the threshold $t$ and the class labeling that minimizes the probability of error of the classifier 
\[
r\cdot x\lessgtr^{\omega_1}_{\omega_2}t.
\]  
Denote by $\varepsilon_1 = \epsilon_1(r)$ the probability of error of this TARP classifier. For an unspecified random vector ${\mathbf r}$, $\epsilon_1$ becomes a random variable ${\pmb \epsilon}_1$ whose probability density function is induced by that of ${\mathbf r}$. Denote by $B_1$ the expected value of  ${\pmb \epsilon}_1$:
\[
B_1=\int_{{\mathbb R}^p} \epsilon_1(r) f_r(r)dr.
\]
The value of $B_1$, which represents the expected error of a TARP classifier under a given probability model for the projection vector ${\bf r}$, is our second benchmark. Observe that $B_1\leq B_0$.

Now consider $n$ samples $r_1,r_2,\ldots,r_n$ of the random vector ${\mathbf r}$ (i.i.d) and the TARP classifier for each projection $r_i\cdot x$, for $i=1,\ldots,n$. Then pick the ``best'' TARP classifier among these $n$ (for example, the one with the smallest probability of error, smallest impurity, lowest entropy, etc.) We call the resulting classifier an ``$n$-TARP'' (best among $n$ TARP classifiers), and denote the probability of error of this classifier by $\varepsilon_1^n = \varepsilon_1^n(r_1,r_2,\ldots,r_n)$. For an unspecified set of random vectors (i.i.d.) ${\bf r}_1,\ldots, {\bf r}_n$, $\varepsilon_1^n$ becomes a random variable ${\pmb \epsilon}_1^n$ whose probability density function is induced by that of ${\bf r}$. Denote by $B_1^n$ the expected value of  ${\pmb \epsilon}_1^n$:
\[
B_1^n=\int_{{\mathbb R}^{p\times n}} \varepsilon_1^n(r_1,\ldots,r_n) f_r(r_1)\ldots f_r(r_n)dr_1\ldots dr_n.
\]
For any positive integer $n$, the value of $B_1^n$ represents the expected error of an $n$-TARP classifier under a given probability model for the projection vector ${\bf r}$. 

Our proposed sequences of benchmarks are built by contructing a $k$-layer binary decision tree using an $n$-TARP at every node of the tree. More specifically, we first fix $n\in {\mathbb N}$ and let $k$ vary among all positive integers. We then consider a $k$-layer tree obtained by generating $n$ random samples of the projection vector ${\bf r}$ for each node and using these projection vector samples to construct a $n$-TARP classifier at each node of the tree. Note that, although our proposed trees are built ``at random," they are not the same as the trees defined by the well-known Random Forests \cite{randomforest}. Indeed, the randomness in our trees comes from the $n$-TARP used for each node, whereas in the case of random forests, it stems from the selection of random subsets of training data to form each decision tree.

We denote the probability of error of a given (sample) decision tree with $k$ levels by $\varepsilon_k^n$. For an unspecified set of random projection vectors (i.i.d.), $\varepsilon_k^n$ becomes a random variable ${\pmb \epsilon}_k^n$ whose probability density function is induced by that of ${\bf r}$. Denote by $B_k^n$ the expected value of  ${\pmb \epsilon}_k^n$, for $k=1,\ldots, \infty$, and set $B_0^n=B_0$, for any positive integer $n$. We propose to use the sequence of bounds $\{B_k^n \}_{k=0}^\infty$ to analyze the nature of the structure of a pattern recognition problem.


\section{Mathematical Properties of Proposed Benchmarks}
\label{theory}

\begin{theorem}[Monotonicity in $k$]
\label{thm:mono_k}
For any integer $n_0 \in \mathbb{N}$, the bounds $\{B_k^{n_0}\}_{k = 1}^{\infty}$ form a monotonic decreasing sequence
\[B_{k+1}^{n_0} \leq B_k^{n_0}\]
converging to a limit
\[B_{\infty}^{n_0} := \lim_{k \to \infty} B_k^{n_0} \]
that is no smaller than Bayes Error
\[B_{\infty}^{n_0} \geq Bayes ~Error. \]
\end{theorem}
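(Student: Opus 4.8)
The plan is to prove the three assertions in sequence — monotonicity, then convergence, then the Bayes Error lower bound — since the last two follow almost immediately once monotonicity is in hand. The entire difficulty is concentrated in the inequality $B_{k+1}^{n_0}\le B_k^{n_0}$, and I would reduce that to a single elementary fact about refining a partition, so that the particular criterion (error, impurity, entropy) used to select the best-of-$n_0$ TARP at each node becomes irrelevant.

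First I would isolate the following pointwise lemma. Consider any node holding a sub-population with class proportions $(p_1,p_2)$, and suppose a split (produced by any TARP, hence by any $n_0$-TARP) sends a fraction $P(L)$ of the mass to a left child with proportions $(p_1^L,p_2^L)$ and the remaining $P(R)$ to a right child with $(p_1^R,p_2^R)$. Labelling each child by its local majority class gives total error $P(L)\min(p_1^L,p_2^L)+P(R)\min(p_1^R,p_2^R)$, whereas not splitting gives $\min(p_1,p_2)$. Because $p_i=P(L)p_i^L+P(R)p_i^R$ and $\min$ is superadditive, i.e. $\min(a_1+b_1,a_2+b_2)\ge\min(a_1,a_2)+\min(b_1,b_2)$, the split error never exceeds the no-split error. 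This holds for every realization of the random projection vectors and for every split, which is exactly why the selection rule for the best-of-$n_0$ TARP plays no role in the argument.

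Next I would lift this to the tree via a recursion on depth. For a node carrying sub-population distribution $\mu$, let $V_k(\mu)$ be the expected error (over that node's projection vectors) of a $k$-layer subtree rooted there, so that $V_0(\mu)$ is the majority-class error and $B_k^{n_0}=V_k(\text{root})$. Using independence of the projection vectors across nodes and the tower property, one has $V_k(\mu)=\mathbb{E}_{\mathbf r}\big[P(L)V_{k-1}(\mu_L)+P(R)V_{k-1}(\mu_R)\big]$, where the expectation is over the $n_0$ vectors at the root and $\mu_L,\mu_R$ are the induced child distributions. The base case $V_1(\mu)\le V_0(\mu)$ is precisely the lemma. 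For the inductive step I would couple the two trees at the root: since the root's best-of-$n_0$ split has the same distribution for a $k$-layer and a $(k+1)$-layer tree, I can use the same split inside both expectations and apply the hypothesis $V_k(\mu_c)\le V_{k-1}(\mu_c)$ to each child $c$, giving $V_{k+1}(\mu)\le V_k(\mu)$. Evaluating at the root yields $B_{k+1}^{n_0}\le B_k^{n_0}$.

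The remaining two claims are short. The sequence $\{B_k^{n_0}\}$ is monotone decreasing and bounded below by $0$, so it converges to a limit $B_\infty^{n_0}$. For the Bayes bound I would note that every realization of the $k$-layer tree is itself a measurable classifier, whose error is therefore at least Bayes Error by optimality of the Bayes decision rule; taking expectations gives $B_k^{n_0}\ge\text{Bayes Error}$ for all $k$, and letting $k\to\infty$ gives $B_\infty^{n_0}\ge\text{Bayes Error}$. I expect the main obstacle to be stating the depth recursion and the root-coupling cleanly — in particular justifying the interchange of the split-selection and the expectation via independence across nodes — but the superadditivity lemma is the real engine, and once it is established the rest is bookkeeping.
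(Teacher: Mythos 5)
Your proof is correct, and its overall skeleton --- show that adding a level to the tree cannot increase the classification error, average over the randomness of the projection vectors, then invoke monotone convergence and the optimality of Bayes Error --- is the same as the paper's. The genuine difference is in how the central inequality is justified. The paper compares a random $(k+1)$-level tree $T$ with its truncation to the first $k$ levels and asserts $\epsilon_{k+1}^{n}(T) \le \epsilon_{k}^{n}(T)$ ``by construction,'' leaning on the design feature that the implemented $n$-TARP reverts to a no-split ($t=\pm\infty$) whenever a split would increase the error; the inequality is thus built into the algorithm rather than proved. You instead establish it unconditionally via the superadditivity of $\min$: since $p_i = P(L)p_i^L + P(R)p_i^R$ and $\min(a_1+b_1,a_2+b_2) \ge \min(a_1,a_2)+\min(b_1,b_2)$, any split whose children carry their local majority-class labels has error no larger than the unsplit node, no matter how the split was chosen. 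This buys generality --- the theorem then holds without the cross-validation safeguard and for any best-of-$n_0$ selection criterion (error, impurity, entropy), exactly as you note --- at the cost of the depth recursion and root coupling, which the paper sidesteps by working with a single tree and its truncation. The one hypothesis worth stating explicitly in your lemma is that the leaves of the depth-$k$ tree are labeled by majority class, which matches the paper's convention that each TARP uses the error-minimizing labeling.
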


\begin{proof}
Let $k \geq 1, ~ k \in \mathbb{Z}$.
Consider a decision tree $T$ with $k+1$ levels built using an $n$-TARP at each node. Observe that $T$ is a random sample of a random tree $\bf{T}$ with a probability density function $\rho_T(T)$ that is induced by the random projection model for ${\mathbf r}$.

Let $\varepsilon_{k+1}^n(T)$ be the probability of classification error of decision tree $T$.
Let $\varepsilon_k^n(T)$ be the probability of classification error of decision tree $T$ restricted to its first $k$ levels.
By construction (since none of the $n$-TARPS considered increases the classification error from the previous level)
\begin{equation}\label{eq:1.samp_ineq}
\epsilon_{k+1}^{n}(T) \leq \epsilon_k^{n}(T).
\end{equation}
Observe that $\varepsilon_{k+1}^n(T)$ and $\varepsilon_k^n(T)$ are samples of the random variables $\pmb{\epsilon}_{k+1}^n$ and $\pmb{\epsilon}_k^n$ respectively.
We have
\begin{equation*} \label{eq:1.work_ineq}
\begin{split}
B_{k+1}^n = \mathbb{E}(\pmb{\epsilon}_{k+1}^n) & = \int_{\mathbb{T}} \epsilon_{k+1}^n(T) \rho_T(T) \,dT \\
 & \leq \int_{\mathbb{T}} \epsilon_{k}^n(T) \rho_T(T) \,dT \\
 & = \mathbb{E}(\pmb{\epsilon}_{k}^n) \\
 & = B_{k}^n
\end{split}
\end{equation*}
where $\mathbb{T}$ is the set of all $k+1$ level decision tress produced by the random projection model. Therefore
\begin{equation}\label{eq:1.4}
B_{k+1}^{n} \leq B_k^{n}~~~ \forall~ k \geq 0,k \in \mathbb{Z}, n \in \mathbb{N},
\end{equation}
and so $\{B_k^{n_0}\}$ is a monotone decreasing sequence in $k$.


By optimality of Bayes Error,
\begin{equation}\label{eq:1.5}
 B_k^{n_0} \geq Bayes~Error ~~\forall~ k \geq 0,k \in \mathbb{Z}, n_0 \in \mathbb{N}.
\end{equation}

From \eqref{eq:1.4}, we know that the sequence $B_k^{n_0}$ is monotone decreasing and from \eqref{eq:1.5}, it is bounded below by the Bayes Error. Hence, using the Monotone Convergence Theorem, the sequence $B_k^{n_0}$ is convergent in $k$ and it converges to its infimum
\[ B_{\infty}^{n_0} = \lim_{k \to \infty} B_k^{n_0}, \quad  B_{\infty}^{n_0} \geq Bayes~Error. \]
\end{proof}

\begin{corollary}
\label{cor:asymp}
The limits $B_{k}^{\infty}$ form a monotonic decreasing sequence
\[B_{k+1}^{\infty} \leq B_{k}^{\infty}\]
converging to a limit
\[B_{\infty}^{\infty} := \lim_{k \to \infty} B_{k}^{\infty} \]
that is no smaller than Bayes Error
\[B_{\infty}^{\infty} \geq Bayes ~Error.\]
\end{corollary}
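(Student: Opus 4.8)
The plan is to obtain the corollary from Theorem~\ref{thm:mono_k} by fixing $k$, letting the number of projections per node $n\to\infty$ to define $B_k^{\infty}$, and then repeating the monotone-convergence argument in the index $k$. Implicit in the statement is the definition $B_k^{\infty}:=\lim_{n\to\infty}B_k^n$, so the first task is to show that this limit exists for each fixed $k$. Granting existence, the monotonicity in $k$ will be inherited from the already-established inequality \eqref{eq:1.4} by passing to the limit in $n$, the lower bound will be inherited from \eqref{eq:1.5}, and convergence of $\{B_k^{\infty}\}_k$ will again follow from the Monotone Convergence Theorem.

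To establish existence of $B_k^{\infty}$ I would fix $k$ and argue that $\{B_k^n\}_{n=1}^{\infty}$ is monotone decreasing in $n$ and bounded below by Bayes Error. The lower bound is immediate from \eqref{eq:1.5}, which holds for every $n$. For the monotonicity, the idea is a pathwise coupling: realize, at every node of the tree, one common infinite i.i.d.\ sequence of projection vectors, and let the $n$-TARP use the first $n$ entries while the $(n+1)$-TARP uses the first $n+1$. At each node the candidate set of the $(n+1)$-TARP then contains that of the $n$-TARP, so the best classifier selected with $n+1$ candidates is never worse than the one selected with $n$. Propagating this per-node domination through the tree gives $\varepsilon_k^{n+1}(T)\leq\varepsilon_k^{n}(T)$ sample-wise, and taking expectations yields $B_k^{n+1}\leq B_k^n$. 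Being monotone decreasing and bounded below, $\{B_k^n\}_n$ converges by the Monotone Convergence Theorem to a limit $B_k^{\infty}\geq Bayes~Error$.

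With $B_k^{\infty}$ well defined, the remaining assertions follow by limit-taking. Inequality \eqref{eq:1.4} gives $B_{k+1}^{n}\leq B_k^{n}$ for every $n$; since both sides converge as $n\to\infty$, the weak inequality is preserved in the limit, so $B_{k+1}^{\infty}\leq B_k^{\infty}$. Hence $\{B_k^{\infty}\}_k$ is monotone decreasing and, by the previous paragraph, bounded below by Bayes Error. A second application of the Monotone Convergence Theorem then shows that $\{B_k^{\infty}\}_k$ converges to its infimum $B_{\infty}^{\infty}=\lim_{k\to\infty}B_k^{\infty}$, with $B_{\infty}^{\infty}\geq Bayes~Error$.

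The main obstacle is the existence step, and specifically the rigorous justification of monotonicity in $n$. In contrast with the monotonicity in $k$ of Theorem~\ref{thm:mono_k}, which is a direct sample-wise consequence of restricting a single tree to fewer levels, monotonicity in $n$ compares two distinct random experiments (best-of-$n$ versus best-of-$(n+1)$ selection at every node) and must be handled by a coupling that simultaneously embeds the $n$-candidate problem inside the $(n+1)$-candidate problem at each node. The delicate point is verifying that the node-by-node improvement propagates to a genuine improvement of the whole tree; once the coupling and this propagation are secured, the remaining steps---preservation of weak inequalities under limits and the two invocations of the Monotone Convergence Theorem---are routine.
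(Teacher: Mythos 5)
The paper states this corollary without proof; implicitly it is obtained by letting $n\to\infty$ in the inequalities of Theorem~\ref{thm:mono_k} (this limit-passing is carried out explicitly inside the proof of Theorem~\ref{thm:asymp}). Your overall route is the same, and the second half of your argument --- preservation of the weak inequality $B_{k+1}^{n}\leq B_k^{n}$ under $n\to\infty$, the lower bound inherited from \eqref{eq:1.5}, and the monotone convergence of $\{B_k^{\infty}\}_k$ --- is correct, and is if anything more careful than the paper, since you notice that the existence of $B_k^{\infty}:=\lim_{n\to\infty}B_k^{n}$ has to be established before any of this makes sense.

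However, your existence step has a genuine gap exactly where you flag it. The coupling does give per-node domination: at any fixed node, the best of the first $n+1$ candidate projections is no worse than the best of the first $n$. For $k=1$ this immediately yields $\varepsilon_1^{n+1}\leq\varepsilon_1^{n}$ samplewise (a minimum over a superset of candidates), hence $B_1^{n+1}\leq B_1^{n}$. But for $k\geq 2$ the claimed propagation $\varepsilon_k^{n+1}(T)\leq\varepsilon_k^{n}(T)$ does not follow from per-node domination: if the $(n+1)$-candidate root selects a different (locally better) split than the $n$-candidate root, the two trees send different sub-populations to their children, and the coupled candidate vectors at the children are then evaluated on different data. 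A locally better split can produce children that are harder to separate with the available projections, so greedy per-node improvements need not compose into an improvement of the whole tree, either samplewise or in expectation. (A related difficulty arises even at a single node if ``best'' is chosen by impurity or entropy rather than by true error, as the paper permits.) Consequently monotonicity in $n$, and with it the existence of $B_k^{\infty}$ for $k\geq 2$, is not established by your argument --- nor, it should be said, by the paper, which simply assumes the limit exists. To close the gap one would need either a different proof that $\lim_{n}B_k^{n}$ exists, or to define $B_k^{\infty}$ as $\inf_n B_k^n$ (or $\limsup_n B_k^n$), for which the remaining limit-taking steps of your argument go through unchanged.
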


\begin{theorem}[Optimality of asymptotes]
\label{thm:asymp}
Suppose that the probability density function $f_R (r)$ for the random projection vector ${\mathbf r}\in {\mathbb R}^p$ is such that
\[
p_u = \int_u f_R(r)dr \neq 0,
\]
for any open set $u\in {\mathbb R}^p$.
If the data points $x$ to be classified are drawn from a mixture of two classes $\omega_1$, $\omega_2$ 
\[
\rho(x)=\text{Prob }(\omega_1) \rho(x|\omega_1)+ \text{Prob }(\omega_2) \rho(x|\omega_2)
\]
such that the marginal distributions for both classes are normal with the same covariance matrix 
\[ \rho(x|\omega_1) = \mathcal{N}(\mu_1,\Sigma) ~,~ \rho(x|\omega_2) = \mathcal{N}(\mu_2,\Sigma) ,\]
then all the asymptotes $B_{\infty}^{n}$, $\forall n \in \mathbb{N}$, are optimal
\[ B_{\infty}^n = B_{k}^{\infty} = Bayes ~Error ~~ \forall~ k,n \in \mathbb{N}. \]
\end{theorem}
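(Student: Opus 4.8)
The plan is to exploit the single structural fact that makes this problem degenerate: when both classes are Gaussian with a common covariance $\Sigma$, the Bayes-optimal decision boundary is a hyperplane, with normal direction $w^{*}=\Sigma^{-1}(\mu_{1}-\mu_{2})$. Consequently a single TARP whose projection vector is parallel to $w^{*}$ reproduces exactly the Bayes classifier, so if I write $\beta$ for Bayes error then $\beta=\min_{r}\epsilon_{1}(r)=\epsilon_{1}(w^{*})$. I would first record this, and observe that $\epsilon_{1}(r)$ depends only on the direction of $r$ and is continuous there: projecting the two classes onto $r$ yields one-dimensional Gaussians $\mathcal{N}(r^{\top}\mu_{i},\,r^{\top}\Sigma r)$, and the minimal thresholding error of two univariate Gaussians with a common (positive) variance is a continuous function of their means and variance for every $r\neq 0$.

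Next I would settle the column asymptotes $B_{k}^{\infty}$ (the limit $n\to\infty$ at fixed depth). For the single node, $B_{1}^{n}=\mathbb{E}\big[\min_{i\le n}\epsilon_{1}(\mathbf{r}_{i})\big]$. Fix $\delta>0$; by continuity the set $\{r:\epsilon_{1}(r)<\beta+\delta\}$ is open and contains $w^{*}$, so by the hypothesis $p_{u}\neq 0$ it has positive probability $q>0$. Then $\mathbb{P}(\min_{i\le n}\epsilon_{1}(\mathbf{r}_{i})\ge \beta+\delta)=(1-q)^{n}\to 0$, and since the integrand is bounded by $B_{0}$, dominated convergence gives $B_{1}^{n}\to\beta$, i.e. $B_{1}^{\infty}=\beta$. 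Because Theorem \ref{thm:mono_k} gives $B_{k}^{n}\le B_{1}^{n}$ for $k\ge 1$, letting $n\to\infty$ yields $B_{k}^{\infty}\le\beta$; combined with the lower bound $B_{k}^{\infty}\ge\beta$ from the same theorem, every column asymptote equals Bayes error.

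The row asymptotes $B_{\infty}^{n}$ (the limit $k\to\infty$ at fixed $n$) are the substance of the theorem and cannot be reached by monotonicity alone, since the doubly-monotone array only forces $\lim_{n}B_{\infty}^{n}=\beta$, not $B_{\infty}^{n}=\beta$ for each finite $n$. I would reduce to the worst case: the best-of-$(n+1)$ split never exceeds the best-of-$n$ split (couple the samples), so $B_{k}^{n}$ is nonincreasing in $n$ and hence $B_{\infty}^{n}\le B_{\infty}^{1}$; it therefore suffices to prove $B_{\infty}^{1}=\beta$, i.e. that an infinitely deep tree built from a single random split per node attains Bayes error. I would argue this by a cell-shrinkage estimate. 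After whitening, the only discriminative coordinate is $s=\hat{w}^{*}\cdot x$, and for any leaf cell $L$ the excess over Bayes error is controlled by the probability mass of the cells that straddle the hyperplane $\{s=c\}$; this mass is at most that of a slab around $\{s=c\}$ whose width is the maximal $s$-extent of a straddling cell. It then remains to show that along the random path leading to almost every point $x$, the $s$-extent of the cell shrinks to $0$: since directions within any neighborhood of $w^{*}$ occur with positive probability, such near-optimal cuts appear infinitely often along the path (second Borel--Cantelli), and each one slices the $s$-range of the current cell, driving its $s$-extent to $0$; dominated convergence then gives $B_{\infty}^{1}=\beta$.

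The main obstacle is exactly this last step — the consistency of the infinitely deep fixed-$n$ tree. Unlike the $n\to\infty$ direction, it is not enough to land one projection near $w^{*}$; one must show that the adaptively chosen oblique cuts actually confine each cell to a vanishing range of the discriminative coordinate $s$, so that almost every cell eventually lies on one side of the Bayes hyperplane. Making the claim that near-$w^{*}$ cuts shrink the $s$-extent precise, together with the accompanying measurability and Borel--Cantelli bookkeeping, is the delicate part; everything else follows from continuity, the positivity hypothesis on $f_{R}$, and the monotonicity already established in Theorem \ref{thm:mono_k} and Corollary \ref{cor:asymp}.
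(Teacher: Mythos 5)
Your plan coincides with the paper's proof in its essentials. For the column limits $B_k^{\infty}$ the two arguments are identical: the Bayes boundary is a hyperplane with normal $\hat N_\rho \propto \Sigma^{-1}(\mu_1-\mu_2)$, the positivity of $f_R$ on open sets gives every neighborhood of $\hat N_\rho$ probability $p_u>0$, $(1-p_u)^n\to 0$, and Theorem \ref{thm:mono_k} transfers the conclusion from $B_1^{\infty}$ to every $B_k^{\infty}$; your additions (continuity of $\epsilon_1$ in the direction, dominated convergence) merely make explicit what the paper leaves implicit. For the row limits $B_{\infty}^{n}$, both proofs rest on the same Borel--Cantelli-type fact --- along every branch of the infinitely deep tree the candidate directions form an i.i.d.\ sequence that almost surely contains vectors arbitrarily close to $\hat N_\rho$ --- but the finishing moves differ: the paper argues that once a near-optimal direction is offered at a node, the $n$-TARP split of that cell is near the cell's Bayes-optimal split, and the level-wise inequality \eqref{eq:1.samp_ineq} guarantees deeper levels cannot undo this; you instead bound the excess error by the mass of cells straddling the Bayes hyperplane and try to drive the $s$-extent of almost every cell to zero. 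Two caveats on your route. First, the reduction to $n=1$ via ``$B_k^n$ nonincreasing in $n$'' is shaky for $k\ge 2$: improving the split at one node changes the cells all its descendants see, so greedy node-level couplings do not compose into a whole-tree guarantee (and the selection criterion may be impurity rather than error); fortunately the reduction is unnecessary, since the along-the-path argument works verbatim for every fixed $n$. Second, cell shrinkage is both stronger than needed and harder to obtain --- a cut in a direction near $w^{*}$ need not shrink the $s$-extent at all if the adaptive threshold lands at the edge of the cell's range --- whereas the paper's ``once near-optimal, stays near-optimal'' step only requires the split error to approach the cell's Bayes error, which follows from the continuity you already set up. Note, finally, that the paper itself does not carry out the uniformity and measurability bookkeeping you flag as delicate; its proof of this half is at essentially the same level of rigor as your sketch.
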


\begin{proof}
The optimal classifier in the case described is a linear separation hyperplane in ${\mathbb R}^p$. Let $\hat{N}_\rho\in{\mathbb R}^p$ be a unit normal vector to that hyperplane. Observe that if the random vector sample drawn is $r=\hat{N}_\rho$, then the TARP classifier will be optimal. Let $u$ be an open neighborhood of $\hat{N}_\rho$ and let $p_u$ be the probability that ${\mathbf r}\in u$. By assumption, $p_u\neq 0$. Consider $n$ independent random vector samples $\{ r_i \}_{i=1}^n$. The probability that none of the samples lie in $u$ is $(1-p_u)^n$, which approaches zero as $n$ goes to infinity. Thus, with probability one, the infinite random vector sequence $\{ r_i \}_{i=1}^\infty$ contains a vector that is arbitrarily close to $\hat{N}_\rho$. This means that the probability of error of an $n$-TARP (best TARP among $n$ trials) will converge to Bayes Error with probability one as $n$ goes to infinity. Thus the expected value of that error, $B_1^n$, will approach Bayes Error as $n$ goes to infinity. So the limit $B_1^\infty=$ Bayes Error.
By Theorem \ref{thm:mono_k}, $B_{k+1}^n \leq B_k^n$. Taking the limit as $n\rightarrow \infty$, we have  $B_{k+1}^\infty \leq B_k^\infty$ for any $k$, and thus $B_k^\infty=$ Bayes Error, for any $k \in \mathbb{N}$.

Now we look at $B_k^n$. So consider a tree with $k$ levels constructed with a $n$-TARP at each node. Observe that each node at the last level of the tree is concerned with classifying a fraction of the original data space; if we picked $\hat{N}_\rho$ as the random vector at each of these nodes, then the overall $k$-level tree classifier would be optimal. In that case, adding further levels (more $n$-TARP classifiers) below any of these nodes would not decrease the accuracy of the classifier below that node, as it is already optimal. As we travel down each branch of the tree and let the number of levels $k$ go to infinity, the random projection vector sample used for the $n$-TARP at each of the nodes we encounter along our path will form an infinite sequence of vectors. By the same argument as above, that infinite sequence contains a vector that is arbitrarily close to the optimal one $\hat{N}_\rho$, with probability one. Thus the probability of error of a $k$-level tree constructed with an $n$-TARP at each node approaches Bayes Error with probability one, as $k$ approaches infinity. Therefore the expected value of that error also approaches Bayes Error, $B_{\infty}^n =$ Bayes Error.

\end{proof}

\section{$n$-TARP Implementation and Bound Estimation}
We implemented an algorithm to estimate the bounds $B_k^n$ for a binary decision problem using a dataset consisting of labeled (potentially high-dimensional) feature vectors. Our code is available at \cite{HeppCodeDataset}. 
To estimate $B_k^n$, we build a k-level binary decision tree with an $n$-TARP at each node. We split the dataset set into two: 25\% is training data used to select the random vectors for each $n$-TARP,  25\% is cross-validation data used to decide whether to apply a stopping criteria, and 50\% of the data is the testing data used to estimate the error rate of the decision tree. Note that the stopping criteria does not effectively stop the tree from growing, but it prevents the data from being split at that node, so that the tree continues to grow, albeit artificially, to a length of $k$-layers,

For example, in order to estimate $B_1^1$, we construct a 1-level decision tree ($k=1$) with a single 1-TARP. To do this, we generate $n = 1$ random vector(s) $\{r_i\}_{i=1}^n$ of the same dimension as the data, with each element of the vector drawn from a uniform distribution on [-1,1]. As described in Section \ref{section:TARP_def}, we take the inner product of every data point in the training set with the random vector $r_1$. We thus have $n = 1$ set(s) of projections of the training samples. For simplicity, we assume that the projected class distributions for the two underlying classes are 1D Gaussians with mean $\mu_1,\mu_2$ and standard deviations $\sigma_1, \sigma_2$, respectively. Bayes Decision Rule \cite{fukunaga1990statistical} in this case yields up to two thresholds: we pick the threshold $t$ that lies between the two empirical means.
Now, we use the threshold $t$ along with the the random vector $r_1$ to classify the cross-validation data and record the error obtained. If this error is greater than the error observed before classifying the cross-validation set (the prior error in this case), then we choose $t$ to be $\pm \infty$. i.e. we do not split the data into two and record the cross-validation error as the error achieved before classification (the prior error in this case). Having constructed a 1-level decision tree, we use it to classify the testing data and record the testing error. We repeat this construction and testing process 100 times and calculate the average testing error and use it as our estimate of $B_1^1$.

Now to estimate $B_1^n$, we build a 1-level decision tree ($k=1$) with a single $n$-TARP. So we generate $n$ random vectors of the same dimension as the features as before. Projecting the dataset onto each of these random vectors, we obtain $n$ one-dimensional datasets and choose the best threshold using Bayes rule for each of these datasets. Let $t^*$  be the threshold that produces the ``best'' split (we pick the one which decreases the Gini impurity \cite{duda2012pattern} on the training data the most). Following the same steps as above, we cross-validate the tree and compute the testing errors for 100 runs of the algorithm and calculate the average testing error. We declare this as our estimate for $B_1^n$.

For the general case of building a $k$-level decision tree with an $n$-TARP at each node of the tree. Basically, we apply an $n$-TARP at each node of the tree starting at the root. Based on the threshold found at that node, we split the data into two classes and pass one class split to the left child and the other to the right child of the current node. We now recursively apply the $n$-TARP construction process described above at the child nodes formed. At level $k$, we will have $2^k$ nodes and we find the average testing error across the data present at all of these nodes and declare it as our estimate of $B_k^n$. So by building a $k$-level tree in this manner, we can estimate $\{ B_1^n, B_2^n,...,B_k^n\}$. Note that we grow a new set of $k$-level trees and evaluate them for each value of the parameter $k$ in order to estimate $B_k^n$. This means that a different set of trees are grown each time to estimate $B_k^n$ for a specific $k$ and $n$. 

Of course, the results at any particular node in the decision tree are affected by the number of data points present at that node. When building a tree, the number of training samples at each node keeps decreasing as we move down the tree. Naturally, our estimates become unreliable when the number of samples at a node is too small. 
Hence, there is an upper limit on the values that $k$ can take while also providing a good estimate of $B_k^n$. This limit is dependent on the dataset being used.

\begin{figure}[t]
\centering
\includegraphics[width=\linewidth]{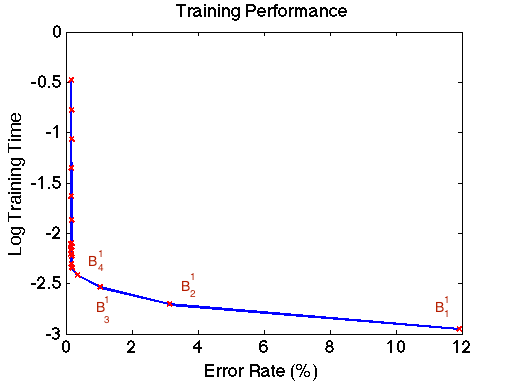} \\	
\caption{\textbf{A Benchmark curve for Normally Distributed Data:} This empirical curve shows the asymptote defined by the limiting error bound $B_\infty^1$. Here $B_\infty^1$ is close to zero, as predicted by Theorem \ref{thm:asymp}.
}
\label{fig:empirical_curves1} 
\vspace{-0.5cm}
\end{figure} 

To test our implementation, we conducted the following experiment. We generated 6000 data samples from a mixture of two Gaussians in ${\mathbb R}^5$ and used 1500 data points for training, 1500 for validation, and the remaining 3000 for testing in the $n$-TARP algorithm. The results for this experiment are presented in Figure \ref{fig:empirical_curves1}. We compute the sequence $B_k^1$ for this data. The parameters for the Gaussian mixture are $\mu_1 = (0,0,0,0,0)$, $\mu_2 = (10,0,0,0,0)$, $\Sigma_1 = \Sigma_2 = {\mathbb I}$. Bayes Error in this case is below machine precision ($10^{-16}$). We observe that the asymptote $B_\infty^1$ is close to zero and equals Bayes Error as expected. This shows that Theorem \ref{thm:asymp} holds true and our bound estimates are accurate.

\section{Experiments with Real Data}
\subsection{Analysis of Digit Recognition Problems}
We first look at two different two-class digit recognition problems. The first problem is that of distinguising between the digit ``0'' and the digit ``1'' on a gray scale image. The other is that of determining whether the digit featured on a gray scale image is even or odd. Both problems will be studied using the MFEAT dataset \cite{MFEAT}, which contains 200 samples of each digit (0-9). The results of our analysis for each of these two problems are presented in Table \ref{table:0v1_bounds} and Table \ref{table:evo_bounds}, respectively. To capture the dependence of the recognition problem on the feature set used to represent the data, we consider three different feature sets: Fourier coefficients (76 features), Karhunen-Love coefficients (64 features) and Zernike Moments (47 features).

We used this data to estimate the first few terms of some of our proposed sequence of bounds. The first bound $B_0$ was estimated as 0.5 since the number of samples of each class for both classification tasks were equal. For the 0-vs-1 classification, the subsequent bounds $B_k^n$ were calculated using the first 100 samples each of 0 and 1 as training data (first 50 of each for training and the remaining 50 for cross validation) and the next 100 samples as testing data for each feature set. Note that the ratio of class 0 and class 1 samples in the training, validation and test sets was the same. For the even-vs-odd classification, the subsequent bounds $B_k^n$ were calculated using the first 100 samples of each digit which were split into even and odd classes as training data (first 50 of each digit for training and the remaining 50 for cross validation) and the next 100 samples as testing data for each feature set. Again, the ratio of class 0 (even) and class 1 (odd) samples in the training, validation and test sets was the same.

As expected, $B_1^n$ and $B_2^n$ for $n = 1,10,50$ are smaller than $B_0$. Note that for the 0-vs-1 classification, $B_1^{50}$ is nearly as low as $2\% $ which is a significant improvement over $B_0$ at $50\% $. Such a small value of the 50-TARP Benchmarks might be surprising, considering their low computational cost. Thus, even without having estimated any of the asymptotes, we can see that the positive-gain regions for each of these feature sets must be very small. On the other hand for the even-vs-odd classification, $B_1^{50}$ is close to $20\% $ for Karhunen-Love coefficients and $30\% $ for the remaining two. The improvement over $B_0$ at $50\% $ is not as great in this case but it is still significant considering it was achieved with naive random projections.

For comparison,  we used MATLAB to build support vector machine (SVM) classifiers  \cite{cortes1995support} using different kernels and parameter values. We also trained a deep neural network classifier \cite{PalmThesis2012} with 2 hidden layers. An Adaboost \cite{boost} classifier was also used for comparison, where the family of weak classifiers used was of the form $p.sgn(x_i - \theta)$ where $p$ is a parity bit, $x_i$ is the $i^{th}$ component of the feature vector $x$ and $\theta$ is a threshold. Training was done over 40 rounds in each case.


\begin{table*}
\begin{footnotesize}
\caption{Classification of images of 0 and 1. The empirical error of support vector machines with various kernels and parameters, Deep Neural Network and Adaboost is compared to the empirical estimate for our proposed error bounds $B_0, B_1^1, B_2^1, B_1^{10}, B_2^{10}, B_1^{50}, B_2^{50}$.}
\begin{tabular}{|c|c|c|c|c|c|c|c|c|c|c|c|}
\hline
Fourier Coefficients &&&&&&&&&&&\\
\hline
Method   &DNN&Adaboost& SVM  & SVM  &  $B_2^1$ & $B_1^1$ & $B_2^{10}$ & $B_1^{10}$ & $ B_2^{50} $ & $ B_1^{50} $ & $B_0$  \\
& &&RBF& Linear& & &  &  & & & \\
parameter values &**& * &$10$ & & & & & & & & \\
Error &0\% &0.5\% & 0\% & 0\% & 16.285\% & 25.125\% & 3.770\% & 6.855\% & 2.205\% & 2.340\% & 50\% \\
Training Time (s)& 9.198 &15.537 & 2.280 & 0.078 & 0.0003 & 0.0006 & 0.0028 & 0.0016 & 0.0102 & 0.0058 & - \\
Testing Time  (s) &0.022 &0.0020 & 0.0455 & 0.0025 & 0.0001 & 0.0001 & 0.00006 & 0.00005 & 0.00006 & 0.00008 & -\\
\hline
\hline
Karhunen-Love Coefficients &&&&&&&&&&&\\ 
\hline
Method   &DNN&Adaboost& SVM  & SVM  &  $B_2^1$ & $B_1^1$ & $B_2^{10}$ & $B_1^{10}$ & $ B_2^{50} $ & $ B_1^{50} $ & $B_0$  \\
& &&RBF& Linear& & &  &  & & & \\
parameter values &**&* &$10$ & & & & & & & & \\
Error &0.5\%&2\% & 1\% & 0.5\% & 22.535\% & 28.775\% & 6.055\% & 10.580\% & 4.315\% & 5.905\% & 50\% \\
Training Time (s)& 8.9281&12.9592 & 0.0494 & 0.0460 & 0.0008 & 0.0003 & 0.0022 & 0.0011 & 0.0086 & 0.0052 & - \\
Testing Time  (s) &0.0247&0.0003 & 0.0052 & 0.0015 & 0.00008 & 0.00009 & 0.00008 & 0.00005 & 0.00011 & 0.00005 & -\\
\hline
\hline
Zernike Moments &&&&&&&&&&&\\ 
\hline
Method   &DNN&Adaboost& SVM  & SVM  &  $B_2^1$ & $B_1^1$ & $B_2^{10}$ & $B_1^{10}$ & $ B_2^{50} $ & $ B_1^{50} $ & $B_0$  \\
& &&RBF& Linear& & &  &  & & & \\
parameter values &**& *&$10$ & & & & & & & & \\
Error &1\%&1.5\% & 1\% & 1\% & 10.925\% & 21.620\% & 2.775\% & 3.865\% & 2.465\% & 2.425\% & 50\% \\
Training Time (s)&7.9931& 9.6483 & 0.0212 & 0.0258 & 0.000002 & 0.0002 & 0.0016 & 0.0013 & 0.0069 & 0.0041 & - \\
Testing Time  (s) &0.0195&0.0003 & 0.0022 & 0.0012 & 0.00011 & 0.00009 & 0.00005 & 0.00002 & 0.00003 & 0.00003 & -\\
\hline
\end{tabular}
\label{table:0v1_bounds}

\end{footnotesize}
$~^*$ The family of weak classifiers used was of the form $p.sgn(x_i - \theta)$ where $p$ is a parity bit, $x_i$ is the $i^{th}$ component of the feature vector $x$ and $\theta$ is a threshold. Training was done over 40 rounds in each case.$~^{**}$The deep neural network consists of 2 hidden layers where the first layer has 35 components and the second layer has 15 components.
\end{table*}

\begin{table*}
\begin{footnotesize}
\caption{Classification of images of even and odd numbers. The empirical error of support vector machines with various kernels and parameters, Deep Neural Network and Adaboost is compared to the empirical estimate for our proposed error bounds $B_0, B_1^1, B_2^1, B_1^{10}, B_2^{10}, B_1^{50}, B_2^{50}$.}
\begin{tabular}{|c|c|c|c|c|c|c|c|c|c|c|c|}
\hline
Fourier Coefficients &&&&&&&&&&&\\
\hline
Method   &DNN&Adaboost& SVM  & SVM  &  $B_2^1$ & $B_1^1$ & $B_2^{10}$ & $B_1^{10}$ & $ B_2^{50} $ & $ B_1^{50} $ & $B_0$  \\
& &&RBF& Linear& & &  &  & & & \\
parameter values &**& *&$10$ & & & & & & & & \\
Error &12.3\%&19.6\% & 15.2\% & Div$^\dagger$ & 38.462\% & 42.854\% & 30.678\% & 35.141\% & 29.047\% & 32.058\% & 50\% \\
Training Time (s)& 14.0915&136.6714 & 0.2155 & - & 0.0025 & 0.0012 & 0.0081 & 0.0047 & 0.0330 & 0.0206 & - \\
Testing Time  (s) &0.0230&0.0016 & 0.0299 & - & 0.0005 & 0.0003 & 0.0005 & 0.0003 & 0.0005 & 0.0003 & -\\
\hline
\hline
Karhunen-Love Coefficients &&&&&&&&&&&\\
\hline
Method   &DNN&Adaboost& SVM  & SVM  &  $B_2^1$ & $B_1^1$ & $B_2^{10}$ & $B_1^{10}$ & $ B_2^{50} $ & $ B_1^{50} $ & $B_0$  \\
& &&RBF& Linear& & &  &  & & & \\
parameter values &**& *&$10$ & & & & & & & & \\
Error &1.4\%&5.9\% & 2.2\% & Div$^\dagger$ & 34.715\% & 40.071\% & 22.378\% & 27.481\% & 16.517\% & 20.687\% & 50\% \\
Training Time (s)&16.4836& 112.4335 & 0.1616 & - & 0.0022 & 0.0011 & 0.0096 & 0.0053 & 0.0289 & 0.0179 & - \\
Testing Time  (s) &0.0238&0.0015 & 0.0201 & - & 0.0005 & 0.0002 & 0.0006 & 0.0003 & 0.0005 & 0.0002 & -\\
\hline
\hline
Zernike Moments &&&&&&&&&&&\\
\hline
Method   &DNN&Adaboost& SVM  & SVM  &  $B_2^1$ & $B_1^1$ & $B_2^{10}$ & $B_1^{10}$ & $ B_2^{50} $ & $ B_1^{50} $ & $B_0$  \\
& &&RBF& Linear& & &  &  & & & \\
parameter values &**&* &$10$ & & & & & & & & \\
Error &15.1\%&23.2\% & 16.3\% & Div$^\dagger$ & 40.056\% & 43.148\% & 34.077\% & 36.210\% & 31.940\% & 33.440\% & 50\% \\
Training Time (s)& 16.9158&82.7901 & 0.3740 & - & 0.0017 & 0.0009 & 0.0065 & 0.0037 & 0.0237 & 0.0149 & - \\
Testing Time  (s) &0.0221&0.0014 & 0.1221 & - & 0.0004 & 0.0002 & 0.0004 & 0.0002 & 0.0004 & 0.0002 & -\\
\hline
\end{tabular}
\label{table:evo_bounds}

\end{footnotesize}
$~^*$ The family of weak classifiers used was of the form $p.sgn(x_i - \theta)$ where $p$ is a parity bit, $x_i$ is the $i^{th}$ component of the feature vector $x$ and $\theta$ is a threshold. Training was done over 40 rounds in each case.$~^{**}$The deep neural network consists of 2 hidden layers where the first layer has 35 components and the second layer has 15 components. $^\dagger$Did not converge after 15000 iterations.
\end{table*}


Observe that, in the case of the 0-vs-1 classification, the  other classifiers are only  slightly more accurate  than the benchmarks despite being significantly more expensive. This is expected since the region of positive gain is very small and thus only modest gains, if at all, can be achieved with more complex methods. There are two factors at play here:  an obvious structure in the data that can be  found by simple heuristics (random projections) as well as a small class overlap  and thus a small value of Bayes Error. Note that this holds for all three feature sets considered. The fact that the structure is obvious makes this particular classification problem not suitable for testing  new  classification methodologies.

The even-vs-odd classification problem is a lot more difficult, as  can be seen by comparing the benchmark values in each case.  The TARP benchmarks go down to about $30\%$ for Fourier Coefficients and the Zernike Moments feature vectors and they go down to about $16\% $ for the Karhunen-Love Coefficients feature vector.  We would need to estimate the value of more terms in each sequence in order to estimate the asymptotes (which we shall do in our further analysis below). However, we can still observe the presence of a method (linear SVM) in the negative-gain region of the benchmark plane. Indeed, it is interesting to note that the method did not converge for any of the feature sets considering that reasonable  piece-wise linear separations can be found at random.  On the other hand, the other methods have a better accuracy than the benchmarks. Thus it is reasonable to imagine that the problem at hand has a hidden structure (i.e., a complex non-linear separation boundary) that cannot be found by simple random projections but that the machinery of an SVM, Adaboost, or DNN can reveal. While the structure is hidden for all feature sets considered, the class overlap appears to vary.  We see a small class overlap  in the case of the Karhunen-Love  coefficients, and  a potentially large class overlap in the case of the Fourier  Coefficients and the Zernike  moments. However the structural-gain region in  each case might actually have a similar size, as the difference between the minimum benchmark values and the smallest of the complex methods accuracy is around 15-20\% in all three cases.  In other words,  all three problems may actually have a structure that is equally well hidden, despite the fact that one potentially has a much smaller Bayes Error than the others.  Such datasets could thus be good candidates for  developing and testing new  classification methods.

\subsection{Analysis of Pedestrian Detection Problems}
We now look at the problem of detecting the presence of a pedestrian on a very low-resolution image. We study this problem using the  Pedestrian Dataset \cite{MunderPedestrian}. This dataset contains low-resolution greyscale images divided into three training sets and two testing sets; we trained with Training Set \#2 (splitting into two for training and validation) and tested on Testing Set \#1. Each set contains 5000 samples without pedestrian and 4800 samples with pedestrian. We used two different feature sets to represent the images \cite{HeppCodeDataset}. 
The first is a feature vector consisting of 648 discrete cosine transform coefficients. The second representation uses 10 rows of the Pascal Triangle of the image \cite{HuangBoutin}. After removing the redundant right-hand-sides of the triangle and storing each complex entry as  two real entries (the middle of the triangle is always real), we ended up with a total of 130 features for each image.  

Again, we computed the first few terms of some of our proposed sequences of bounds. Our results are presented in Table \ref{table:pedestrian_bounds}.
The first bound $B_0$ was estimated as $\frac{4800}{9800}$ (The ratio of the pedestrian samples to the total number of samples). The subsequent bounds $B_k^n$ were estimated using Training Set \#2 and Testing Set \#1 for each set of feature.  Note that the 4900 images we used for training contained the same ratio of no pedestrian (2500) to pedestrian (2400) pictures as the validation set (2500 no pedestrian, 2400 pedestrian).  As expected, $B_1^n$ and $B_2^n$ for $ n = 1,10,50$ are smaller than $B_0$ for both feature sets. In fact, $B_1^{50}$ at $25.3\% $ is nearly half of $B_0$ for the Pascal Triangle coefficient feature vectors.

\begin{table*}
\begin{footnotesize}
\caption{Pedestrian Detection from Low-Resolution Pictures. The empirical error of support vector machines with various kernels and parameters along with Deep Neural Network is compared to the empirical estimate for our proposed error bounds $B_0, B_1^1, B_2^1, B_1^{10}, B_2^{10}, B_1^{50}, B_2^{50}$.}
\begin{tabular}{|c|c|c|c|c|c|c|c|c|c|c|c|}
\hline
With DCT Coefficients &&&&&&&&&&&\\
\hline
Method   &DNN& SVM  & SVM  & SVM  &  $B_2^1$ & $B_1^1$ & $B_2^{10}$ & $B_1^{10}$ & $B_2^{50}$ & $B_1^{50}$ & $B_0$  \\
&&RBF& Linear&MLP &  &  &&&&& \\
parameter values &** &$50$ & &$[1,-1]^*$&&&&&&& \\
Error &22.5\% &21.5\%  &  Div$^\dagger$ & 38.7\% & 44.2\%  & 45.5\% & 37.8\% & 39.4\% & 35.9\% & 36.8\% & 49.0\% \\
Training Time (s)& 577.659& 177.399 &  - & 21.801 & 0.141   & 0.067 & 0.440& 0.268&0.181&0.117  & -  \\
Testing Time  (s) &0.203&  3.123 &  -  & 2.010 & 0.029 &0.022 & 0.029& 0.021&0.028&0.021 & -\\
\hline
\hline
With Pascal Triangles &&&&&&&&&&&\\
\hline
Method   &DNN& SVM  & SVM  & SVM  &  $B_2^1$ & $B_1^1$ & $B_2^{10}$ & $B_1^{10}$ & $B_2^{50}$ & $B_1^{50}$ & $B_0$  \\
&&RBF& Linear&MLP &  &  &&&&& \\
parameter values &**&$50$ & &$[1,-1]^*$&&&&&&& \\
Error& 25.2\% & 27.6\% &  Div$^\dagger$  & 49.50\% & 30.6 \%  & 32.9\% &27.1\% &26.8\% & 25.5\% & 25.3\% & 49.0\% \\
Training Time (s) & 102.987 & 65.560 &  - &  7.061 & 0.032   & 0.017  & 0.103&0.063&0.416&0.280 & -  \\
Testing Time  (s) &0.089 & 2.101 &  - & 0.628 & 0.006 &0.004&0.005&0.004&0.004&0.005 &-\\
\hline
\end{tabular}
\label{table:pedestrian_bounds}

 $~^*$Default parameters in MATLAB. $~^{**}$The deep neural network consists of 2 hidden layers where for DCT, the first layer has 100 components and the second layer has 50 components. For Pascal Triangles, the first layer has 60 components and the second layer has 30 components.$^\dagger$Did not converge after 15000 iterations.
\end{footnotesize}
\end{table*}


For comparison,  we used MATLAB to build support vector machine (SVM) classifiers  \cite{cortes1995support} using different kernels and parameter values. We also trained a deep neural network classifier \cite{PalmThesis2012} with 2 hidden layers. 
Again, the linear SVM did not converge for either of the feature vectors within the default maximum number of iterations (15,000).  Thus, the naive approach of finding a linear separation at random performs better than the iterative approach of the linear SVM in this case. Similarly, the error rate of the SVM with the MLP kernel is higher than some of the bounds listed. Thus both the linear SVM and the MLP kernel SVM lie in the negative-gain region of the benchmark plane. 
On the other hand, for the DCT coefficient feature vector, the error rate of the Deep Neural Network and the SVM with Radial Basis Function Kernel is significantly lower than any of the $B_k^n$ presented in Table \ref{table:pedestrian_bounds}. While we do not have enough data to estimate the position of the asymptotes, we can conjecture that these two methods lie in the structure-gain region of the Benchmark plane. Thus the evidence suggests that the  structure is hidden since simple heuristics do not perform as well as more sophisticated methods.
In contrast, for the case of the Pascal Triangle coefficient feature vector, the bound $B_1^{50}$ is only beat by the Deep Neural Network by a mere $0.1\% $ which is quite small considering the much higher computational cost and sophistication for the DNN. The SVM with the RBF kernel is even worse than our bounds. Thus neither of these methods lie in the positive-gain region of the benchmark plane. It could be that the structure of the problem is obvious and that the overlap between the classes is fairly large (around 25\%). It could also be that the overlap is smaller than that but the structure is very well hidden, so much so that even the DNN or the two SVMs we tried are unable to capture that structure. Thus this dataset could be a good candidate to develop and test new pattern recognition methods. 

\subsection*{\textbf{Estimation of Asymptotes for the Even-vs-Odd Digit Classification Problem}}
Our previous experiments illustrate how to use one, two or a few bounds to analyze the structure and class overlap of a pattern recognition problem. For a more complete analysis, one needs to estimate the asymptotes $B_{\infty}^n$ for some value(s) of $n$.  
To illustrate how to do this, we used the MFEAT data and we focus on the even-vs-odd classification problem and use the Karhunen-Love Coefficient representation.




\begin{figure}[ht]
\centering
\includegraphics[width=0.5\textwidth]{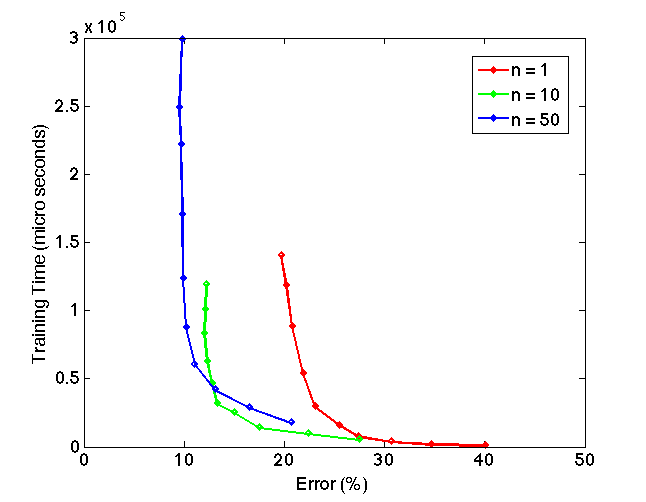}
\caption{\textbf{Benchmark Curves for the even-vs-odd Classification Problem:} The evolution of the sequence of bounds $B_k^n$ for $n = 1,10,50$ using the Karhunen-Love Coefficients clearly shows the location of the asymptote with merely 10 terms.
}
\label{fig:Bkn_evolution} 
\end{figure}



Figure \ref{fig:Bkn_evolution} shows the curves obtained after computing ten terms in the sequence $B_k^n$ for $n$=1,10 and 50. 
This seems to be enough terms to estimate the asymptote for each curve, the smallest of which ($n=50$) appearing to lie around 10\%, still above the error rate of Adaboost (5.9\%), DNN (1.4\%) and the RBF kernel SVM (2.2\%). This provides further evidence that the class separation structure is hidden and that a sophisticated classifier is required. For example, the sophistication of the DNN classifier can decrease the error by a factor greater than seven.

This convergence to a limiting $B_\infty^n$ indicates that our benchmarks are resistant to over-fitting the data as simply increasing the number of levels in the decision tree does not keep reducing the training error to $0\% $. Over the first few levels of the tree, the natural structure present in the data has been found and further levels don't find any new structure that can reduce the error further. The first few levels where the structure within the data is being explored corresponds to the portion of the curves in Figure \ref{fig:Bkn_evolution} where the error is reducing relatively quickly with the level parameter $k$. The later levels where the structure within the data has been completely detected corresponds to the portion of the curves where the errors remain more or less the same for increasing $k$ with increasing computational cost (training time).

\section{Conclusions}

We began by observing that some pattern recognition problems, in particular high-dimensional ones, are a lot easier to solve than others. Indeed the structure of the data is sometimes so easy to find that simple heuristics can lead to a near optimal classification (i.e., with a probability of error close to Bayes error); the existence of such problems was proven in our experiments.  Other problems are a lot more difficult; finding a way to accurately predict the class from a feature vector necessitates a sophisticated method. 

In order to analyze the nature of the structure of the class distributions in a pattern recognition problem, we proposed simple heuristics to obtain upper bounds on the probability of error of a classifier. Our bounds are obtained using extremely low-computation methods based on random projections onto a one-dimensional subspace of the feature space and are particularly well-suited to analyze high-dimensional data sets. We use these bounds to construct a sequence of benchmark curves parameterized by probability of error and computational cost. Each curve determines an error asymptote which we proved to be optimal (equal to Bayes Error) in some cases. When using a non-trivial pattern recognition method on a specific classification problem, one should check that the computational time and probability of error of the method are situated on the left-hand-side of our proposed curves (i.e., in the positive-gain region of the benchmark plane); else their sophistication is either unwarranted (on or near the benchmark curve) or unsuited (right of the curve, the negative-gain region of the benchmark plane) for the structure of the data.

To illustrate our proposed benchmarking framework, we looked at two types of digit recognition problems: the problem of distinguishing between ``0'' and ``1'' on a gray-scale image of a hand-drawn digit, and the problem of  distinguishing between even and odd integers on a gray-scale image of a hand-drawn digit. Our analysis indicates clearly that the structure of the first problem is a lot easier to find than that of the second. Such simple classification problems should not be used for testing and developing new pattern recognition methods. We also looked at two pedestrian detection problems. For one of these problems, none of the popular pattern recognition methods we tried was found to lie in the positive-gain region of the benchmark plane. Thus the evidence suggests that the problem at hand has an obvious structure that can be found by random projection, while the class overlap is fairly large (Bayes Error near 25\%). Based on evidence presented in \cite{han2015hidden}, we expect many pattern recognition problems based on image/video data to have a similarly obvious structure.


\bibliographystyle{IEEEtran}
\bibliography{refs_tarun}

\end{document}